\documentclass[10pt,journal,compsoc]{IEEEtran}

\usepackage{amsthm}
\usepackage{amsmath}
\usepackage{amsfonts}
\usepackage{array}
\usepackage{graphicx}
\usepackage{subcaption}
\usepackage{multirow}
\usepackage{color}
\usepackage{hyperref}
\hypersetup{
    colorlinks=true,
    linkcolor=blue,
    filecolor=magenta,      
    urlcolor=cyan,
    pdftitle={Overleaf Example},
    pdfpagemode=FullScreen,
    }
\urlstyle{same}
\usepackage{bbm}
\usepackage[mathscr]{eucal}
\newcommand{\tabincell}[2]{\begin{tabular}{@{}#1@{}}#2\end{tabular}}  
\newcommand{\feadim}{D}
\newcommand{\ngs}{P}
\newtheorem{theorem}{Theorem}
\newtheorem{definition}{Definition}
\hyphenation{op-tical net-works semi-conduc-tor}

\newcommand{\mycolor}{\textcolor{black}}

\begin{document}

\title{Towards Improved and Interpretable Deep Metric Learning via Attentive Grouping}

\author{Xinyi Xu,
        Zhengyang Wang,
        Cheng Deng*,~\IEEEmembership{Senior Member,~IEEE},
        Hao Yuan,
        and Shuiwang Ji*,~\IEEEmembership{Senior Member,~IEEE}
        \IEEEcompsocitemizethanks{\IEEEcompsocthanksitem X. Xu and C. Deng are with the School of Electronic Engineering, Xidian University, Xi'an 710071, China. E-mail: \{xyxu.xd, chdeng.xd\}@gmail.com \protect\\
        \IEEEcompsocthanksitem Z. Wang, H. Yuan, and S. Ji are with the Department of Computer Science and Engineering, Texas A\&M University, College Station, TX 77843. E-mail: \{zhengyang.wang, hao.yuan, sji\}@tamu.edu}
        \thanks{This work was performed while the first author is visiting Texas A\&M University. *Correspondence should be addressed to these authors.}
}

\IEEEtitleabstractindextext{

\begin{abstract}
Grouping has been commonly used in deep metric learning for computing diverse features. To improve the performance and interpretability, we propose an improved and interpretable grouping method to be integrated flexibly with any metric learning framework. Our method is based on the attention mechanism with a learnable query for each group. The query is fully trainable and can capture group-specific information when combined with the diversity loss. An appealing property of our method is that it naturally lends itself interpretability. The attention scores between the learnable query and each spatial position can be interpreted as the importance of that position. We formally show that our proposed grouping method is invariant to spatial permutations of features. When used as a module in convolutional neural networks, our method leads to translational invariance. We conduct comprehensive experiments to evaluate our method. Our quantitative results indicate that the proposed method outperforms prior methods consistently and significantly across different datasets, evaluation metrics, base models, and loss functions. For the first time to the best of our knowledge, our interpretation results clearly demonstrate that the proposed method enables the learning of distinct and diverse features across groups. The code is available on \url{https://github.com/XinyiXuXD/DGML-master}.
\end{abstract}

\begin{IEEEkeywords}
Deep metric learning, grouping, attention, interpretability, invariance
\end{IEEEkeywords}}

\maketitle
\IEEEdisplaynontitleabstractindextext

\IEEEpeerreviewmaketitle

\IEEEraisesectionheading{\section{Introduction}}
Deep metric learning (DML) computes image representations by employing a deep neural network (DNN) \cite{lecun1998gradient,Ji:TPAMI2012,Gao:TPAMI19, zhengyang2020pami} to map images from the original pixel space to a feature embedding space. 
The learned representations have been widely applied in downstream computer vision tasks, including image clustering \cite{oh2016deep}, image retrieval \cite{sohn2016improved, wang2019multi}, face verification \cite{schroff2015facenet}, etc. 
\mycolor{DML has achieved significant progress by
designing different metric loss functions \cite{hadsell2006dimensionality, schroff2015facenet, sohn2016improved, wu2017sampling} to guide the training procedure of DNN. The metric loss function is crucial for DML models since it provides guidance on network training and encourages the learned representations to capture discriminative information. Commonly used metric loss functions include 
the contrastive loss \cite{hadsell2006dimensionality}, the triplet loss \cite{schroff2015facenet}, the n-pair loss \cite{sohn2016improved}, and the margin loss \cite{wu2017sampling}. In addition, several strategies have been proposed to facilitate the training of DML models, such as lifted triplet \cite{oh2016deep}, n-pair \cite{sohn2016improved}, distance mining \cite{wu2017sampling}, and hardness aware \cite{yuan2017hard, zheng2019hardness}. } 

\mycolor{While DNNs in DML are powerful, traditional DML methods only learn a vector in a single embedding space for the whole image, which may not fully capture the semantics of inputs. This drawback causes a dimension saturation issue, which is observed when the improvements of performance saturate with the increasing of feature dimensions~\cite{oh2016deep,opitz2018deep,roth2020revisiting}.
To overcome this issue, recent studies \cite{opitz2018deep, kim2018attention, chen2019hybrid, xuan2018deep, aziere2019ensemble} split the large embedding dimension into different groups and learn multi-embedding representations. Specifically, multiple feature vectors are learned and jointly represent the input image. Each feature vector corresponds to one group and different groups are expected to capture different input characteristics. To be apart from traditional DML techniques, we categorize such methods as   
deep grouping metric learning (DGML). Notably, DGML methods generally outperform traditional single-embedding DML methods since grouping can encourage the output embeddings to capture more comprehensive characteristics~\cite{opitz2018deep, milbich2020diva}. However, current DGML methods\cite{opitz2018deep,kim2018attention,chen2019hybrid, xuan2018deep, aziere2019ensemble} cannot ensure different groups capturing different and discriminative characteristics, which limits their performance and interpretability.}

\mycolor{In this work, we propose an improved and interpretable DGML method, named  A-grouping. Our A-grouping module
splits the entire embedding spaces into different groups
by using different position importance for each group. Then position-wise importance scores are obtained via attention mechanism \cite{vaswani2017attention}. Attention operations compute the outputs based on the multiplications between the query, key, and value matrices.
Different from self-attention~\cite{vaswani2017attention}, in which these matrices are obtained from the input features via linear transformations, our A-grouping
computes a learnable query vector for each group and obtain the importance scores by attending the query to spatial positions of
the key matrix.}
The query is fully trainable and can capture the information distributed on these important positions when combined with the metric loss. In addition, it can capture group-specific information when combined with the diversity loss. An appealing property of our method is that it naturally lends itself interpretability. The attention scores between the learnable query and each spatial position can be interpreted as the importance of that position. We formally show that our proposed grouping method is invariant to spatial permutations of features. When used as a module in convolutional neural networks, our method leads to translational invariance. This property enables our method to compute representations that are invariant to translations on the original input images. 
We conduct comprehensive experiments to evaluate our method. Our quantitative results indicate that the proposed method outperforms prior methods consistently and significantly across different datasets, evaluation metrics, base models, and loss functions. For the first time to the best of our knowledge, our interpretation results clearly demonstrate that the proposed method enables the learning of distinct and diverse features across groups. 

\section{Background and Related Work}
In this section, we first give the formal problem formulation of the deep metric learning (DML) and introduce important loss functions in DML. Then we discuss the deep grouping metric learning (DGML) and related studies.

\subsection{Problem Formulation of DML}
DML is commonly studied under the zero-shot learning settings. To be specific, the testing data are composed of images from classes that are not included in the training data. Formally, let $\{\boldsymbol{X}_i, y_i\}_{i=1}^{N}$ and $\{\hat{\boldsymbol{X}}_i, \hat{y}_i\}_{i=1}^{M}$ denote the training and testing data, respectively. Here, $\boldsymbol{X}_i, \hat{\boldsymbol{X}}_i \in \boldsymbol{\mathcal{X}}$ represent images and $y_i, \hat{y}_i \in \mathbb{N}$ are the corresponding labels. In zero-shot learning, the training and testing data have disjoint label sets, \emph{i.e.}, $\{y_i\}_{i=1}^{N} \cap \{\hat{y}_i\}_{i=1}^{M} = \emptyset$.

During training, DML learns an embedding mapping $f: \boldsymbol{\mathcal{X}} \mapsto \boldsymbol{\mathcal{F}} \subseteq \mathbb{R}^{\feadim}$ with a deep learning model, which maps images to feature vectors in the $\feadim$-dimensional embedding space $\boldsymbol{\mathcal{F}}$. With $f$, the similarity between any two images $\boldsymbol{X}_i$ and $\boldsymbol{X}_j$ can be measured based on $\boldsymbol{f}_i = f(\boldsymbol{X}_i)$ and $\boldsymbol{f}_j = f(\boldsymbol{X}_j)$. For example, a small Euclidean distance $d(\boldsymbol{f}_i, \boldsymbol{f}_j) = \|\boldsymbol{f}_i-\boldsymbol{f}_j\|_{2}$ or a large cosine similarity $s(\boldsymbol{f}_i, \boldsymbol{f}_j) = \boldsymbol{f}_i^{T}\boldsymbol{f}_j/(\|\boldsymbol{f}_i\|_{2}^{2} \cdot \|\boldsymbol{f}_j\|_{2}^{2})$ both can indicate that $\boldsymbol{X}_i$ and $\boldsymbol{X}_j$ are similar. In DML, we consider images belonging to the same class to be similar to each other, while images with different labels have low similarities. As a result, the learning objective of DML is to train the embedding mapping $f$ such that the similarities between images are correctly reflected in the corresponding embedding space $\boldsymbol{\mathcal{F}}$ with respect to a predefined metric. With this objective, various loss functions have been proposed for DML.

In practice, $f$ is usually implemented by a backbone CNN for image classification, whose last classification layer is replaced by an output module producing $\feadim$-dimensional feature vectors. The CNN is typically pretrained on a large-scale image classification dataset and then fine-tuned on the DML training dataset $\{\boldsymbol{X}_i, y_i\}_{i=1}^{N}$ with the replacement. After training, the quality of $f$ is evaluated through image retrieval and image clustering tasks \cite{sohn2016improved} on the testing data $\{\hat{\boldsymbol{X}}_i, \hat{y}_i\}_{i=1}^{M}$. Specifically, we perform these tasks based on $f(\hat{\boldsymbol{X}}_i)$ and use the corresponding labels $\hat{y}_i$ to measure the performance.

\subsection{\mycolor{Three Representative DML Loss Functions }}

A major direction in DML studies is the development of various loss functions. In the following, we introduce three representative loss functions in DML.
The contrastive loss \cite{chopra2005learning} 
minimizes the Euclidean distance of positive pairs (images with the same label), while pushing that of negative pairs (images with different labels) above a preset margin. Formally, given $(\boldsymbol{X}_i, y_i)$ and $(\boldsymbol{X}_j, y_j)$ the contrastive loss is formulated as
\begin{equation}
   \mathcal{L}_{\text{con}} = l_{i,j}*d_{i,j} + (1-l_{i,j})*\left[m-d_{i,j}\right]_+, 
   \label{con_loss}
\end{equation}
where $d_{i,j} = d(\boldsymbol{f}_i, \boldsymbol{f}_j)$ is the Euclidean distance between $\boldsymbol{f}_i = f(\boldsymbol{X}_i)$ and $\boldsymbol{f}_j = f(\boldsymbol{X}_j)$, $l_{i,j} = \mathbbm{1}_{\{y_{i}=y_{j}\}}$ is the pair indicator that is equal to $1$ for positive pairs and $0$ for negative pairs, $m$ is the preset margin and $[\cdot]_+$ represents the hinge loss function. 

The binomial deviance loss \cite{yi2014deep} employs the cosine similarity and applies the preset margin for both positive pairs and negative pairs. Specifically, it is defined as
\begin{equation}
\begin{split}
 \mathcal{L}_{\text{bin}}= &l_{i,j} * \log\left(1+e^{-\alpha(s_{i,j}-m)\beta_{1}}\right) + \\
     &(1-l_{i,j}) * \log\left(1+e^{\alpha(s_{i,j}-m)\beta_{0}}\right),
\end{split}
\label{bd_loss}
\end{equation}
where $s_{i,j} = s(\boldsymbol{f}_i, \boldsymbol{f}_j)$ computes the cosine similarity, $m$ is the preset margin, and $\alpha$, $\beta_{0}$ and $\beta_{1}$ are scaling factors.

\mycolor{In addition, the margin loss \cite{wu2017sampling} uses the Euclidean distance to measure the similarity among different example pairs and applies the preset
margin for all example pairs.} In particular, it uses different preset margins for positive pairs and negative pairs with two hyperparameters $\eta$ and $m$. Concretely, the margin loss is given by
\begin{equation}
\begin{split}
    \mathcal{L}_{\text{mar}} = &l_{i,j}  * \left[d_{i,j}-(\eta-m)\right]_{+} + \\
    &(1-l_{i,j}) * \left[(\eta + m)-d_{i,j}\right]_{+}.
\end{split}
\label{mar_loss}
\end{equation}

Next, we discuss DGML and focus on grouping methods. It is worth noting that grouping methods can generally be used together with different loss functions.

\begin{figure*}[t]
	\centering
	\includegraphics[width=.98\linewidth]{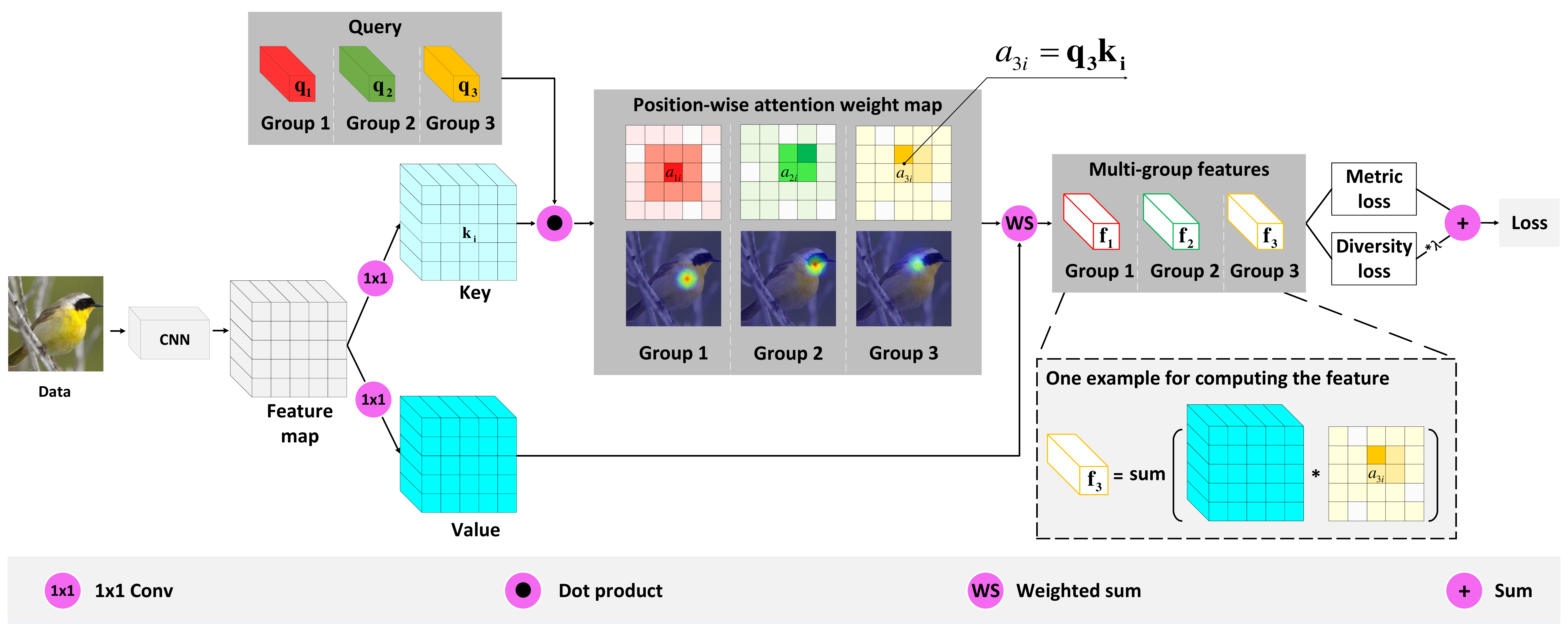}
	\caption{An illustration of our proposed deep metric learning framework. Our A-grouping module takes the feature maps generated by CNNs as the input and outputs multi-group features. In this example, the number of groups is set to three. The key and value tensors (blue cube) are first computed by two independent linear transformations of the feature maps, which is implemented by the $1 \times 1$ convolution. Then the position-wise weight maps are generated for each group by attending the corresponding query vector to the group-shared key. 
	Finally, we compute the weighted summation over the  
	value tensor with group-specific weight maps to obtain the output feature vectors for different groups. We provide an example in the dashed box to explain the computation procedure for group 3. 
    To train the model, the metric loss is applied to each group separately, whereas the diversity loss is computed across every two groups.}
	\label{framework}
\end{figure*}

\subsection{Deep Grouping Metric Learning}
\mycolor{Recently, deep grouping metric learning (DGML) has achieved remarkable progress, which learns multiple embedding vectors to replace the single embedding vector in DML. 
It is motivated by the observation that DML suffers from the dimension saturation issue~\cite{oh2016deep, opitz2018deep, roth2020revisiting}.} Specifically, the performance of DML increases when the dimension of the embedding space $\feadim$ becomes larger, due to the increased model capacity. However, the amount of improvement gets smaller and smaller with increasing $\feadim$. In certain cases, increasing $\feadim$ even hurts the performance.

The causes of the dimension saturation issue can be analyzed in two aspects. First, with a large $\feadim$, overfitting is expected \cite{opitz2018deep, chen2019hybrid}, as extra dimensions of the embedding space $\boldsymbol{\mathcal{F}}$ may capture noises in the training data and lead to poor generalization. Second, we hypothesize that projecting images into a single embedding space $\boldsymbol{\mathcal{F}}$ may cause all the dimensions to statistic the characteristics of input images under one distribution only.

Based on these insights, a straightforward solution to the dimension saturation issue is to have multiple feature vectors providing different \mycolor{characteristics} of the input images. Moreover, each feature vector has a smaller dimension in order to avoid overfitting. Specifically, the original output module in DML producing a $\feadim$-dimensional feature vector is replaced a grouping module generating $\ngs$ feature vectors with smaller dimensions, where the sum of dimensions is still $\feadim$. As a result, the single $\feadim$-dimensional embedding space $\boldsymbol{\mathcal{F}}$ is decomposed into a concatenation of multiple compact embedding spaces $\boldsymbol{\mathcal{F}}_i$, $i=1,2,\ldots,\ngs$. During training, the  objective is applied on each $\boldsymbol{\mathcal{F}}_i$ independently and a diversity loss is added to force each $\boldsymbol{\mathcal{F}}_i$ to encode distinct and diverse features of input images. \mycolor{To distinguish from traditional DML methods, we categorize such DML methods with grouping modules as DGML \cite{opitz2018deep, kim2018attention, chen2019hybrid, xuan2018deep, aziere2019ensemble}}. Ideally, DGML can address the dimension saturation issue and yield further improvements with a large $\feadim$.

\mycolor{In DGML, the key component is the design of different grouping modules~\cite{opitz2018deep, kim2018attention, chen2019hybrid}. Recently, BIER \cite{opitz2018deep} proposes to project the input into different feature spaces by employing multiple linear projection functions, which can be named M-grouping. It employs the sample reweighting strategy to encourage different groups to focus on different data.  Instead of using linear projection functions, later efforts have been devoted to designing nonlinear grouping modules \cite{kim2018attention, chen2019hybrid}. For example, recent work~\cite{kim2018attention} proposes a grouping module based on gated convolution networks~\cite{yu2019free} and we name it as G-grouping module. It learns an element-wise weight mask for each group from the 
input feature maps. In addition, C-grouping~\cite{chen2019hybrid} is recently proposed to learn channel-wise weights for the input feature maps. Specifically, it employs the channel-wise attention~\cite{hu2018squeeze} to perform the non-linear transformations.
Generally, these DGML methods~\cite{opitz2018deep, kim2018attention, chen2019hybrid} follow the same pipeline which consists of a shared backbone and a grouping module.}


\mycolor{While DGML methods are more powerful than traditional DML methods, it is not clear whether the different groups in DGML indeed capture diverse characteristics, which may limit the performance and interpretability.  } In this work, we propose an novel DGML model, in the core of which lies a powerful and interpretable attentive grouping (A-grouping) module. 
Our A-grouping module achieves consistently and significantly improved performances across different datasets, evaluation metrics, base models, and loss functions. 
More importantly, the proposed A-grouping module yields meaningful interpretation results, which clearly demonstrates that the distinct and diverse features across groups are learned.

\section{The Proposed Methods}
In this work, we propose a novel attentive grouping module, \mycolor{named} A-grouping, to improve the performance and interpretability of DGML models. Specifically, our A-grouping module employs the attention mechanism to learn group-aware embeddings for the output feature maps of the CNNs. As shown in Figure~\ref{framework}, our non-linear A-grouping module first learns distinct queries for different groups and uses these queries to obtain position-wise attention weight maps. By employing the attention mechanism, our A-grouping module can effectively incorporate global information from the input to produce the weight maps.
In addition, our proposed A-grouping module can build interpretable DGML models. Since each attention map corresponds to one group, then we can understand the correspondence between input pixels and feature groups by visualizing attention maps. In this section, we first present the problem formulation of the DGML problem in Section~\ref{sec:formualtion}. Next, we introduce our A-grouping module and its interpretability in Section~\ref{sec_a_grouping} and~\ref{sec:interpretability}. Finally, we discuss the permutation invariance and translation invariance properties of our proposed A-grouping module in Section~\ref{sec:invariance}.

\subsection{Problem Formulation of DGML}~\label{sec:formualtion}
We first formally define the DGML problem. Given an input image, the original DML problem learns its feature vector in the embedding space. Different from DML, the goal of the DGML problem is to map the input image to multiple embedding spaces (groups).
Specifically, suppose each feature vector contains $\ngs$ groups. Then the original embedding mapping $f: \boldsymbol{\mathcal{X}} \mapsto \boldsymbol{\mathcal{F}} \subseteq \mathbb{R}^{\feadim}$ is replaced by $\ngs$ embedding mappings $f_{i}: \boldsymbol{\mathcal{X}} \mapsto \boldsymbol{\mathcal{F}}_{i} \subseteq \mathbb{R}^{\feadim_{i}}$, $i=1,2,\ldots,\ngs$, where $\feadim_1 + \feadim_2 + \cdots + \feadim_{\ngs} = \feadim$. In each feature space, the similarity and label consistent constraints should be satisfied. That is, the metric learning loss is independently computed and optimized for each space by Eqs.~(\ref{con_loss}), (\ref{bd_loss}), or (\ref{mar_loss}). Simultaneously, the diversity loss is employed to encourage the diversity between every two groups, where the formula is
\begin{equation}\label{l_d}
	\mathcal{L}_{\text{div}} = \log\left(1+e^{\alpha(s_{i,j}-\mu)\beta_{0}}\right),
\end{equation}
where $s_{i,j}$ is the cosine distance of features from two groups of the same image.
The diversity loss is in the form of binomial deviance loss, where only negative cases are considered. 

Commonly, the embedding mapping of DGML is composed of one group-shared feature extractor which is a CNN backbone, and $\ngs$ group-specific grouping mappings. Therefore, the problem is converted to how to obtain the grouping mappings. In this work, we propose the A-grouping module to perform group mappings with interpretability based on the attention mechanism.

\subsection{The Proposed Attentive Grouping Method} \label{sec_a_grouping}
The attention mechanism was originally developed in the natural
language processing domain~\cite{vaswani2017attention, LiuTextICDM19} and has been
extended to deal with image and video
data~\cite{Gao:KAN,zwangUnetAAAI20,wang2018non,Liu:TMI2020}. In this work, we propose to
develop a novel grouping method based on the attention mechanism, \mycolor{called} A-grouping. We
show that our grouping method can not only lead to improved metric
learning performance, but also endows interpretability to the
resulting model.

Our proposed A-grouping takes a set of
feature maps as the input and generates multiple groups of feature
vectors as the output. In particular, let $\boldsymbol{\mathscr{I}} \in
\mathbb{R}^{H \times W \times C}$ denotes the input tensor
containing feature maps, where $H$, $W$, and $C$ denote the height,
width, and the number of channels, respectively. The input tensor
$\boldsymbol{\mathscr{I}}$ is first processed by two independent
$1\times 1$ convolutions to generate the key and value tensors
$\boldsymbol{\mathscr{K}}$ and $\boldsymbol{\mathscr{V}}$ as
follows:
\begin{eqnarray*}
\boldsymbol{\mathscr{K}} &=& \mbox{Convolution}_{1\times
1}(\boldsymbol{\mathscr{I}})\in \mathbb{R}^{H
\times W \times \feadim_{K}},\\
\boldsymbol{\mathscr{V}} &=& \mbox{Convolution}_{1\times
1}(\boldsymbol{\mathscr{I}})\in \mathbb{R}^{H \times W \times \feadim_{V}},
\end{eqnarray*}
where $\feadim_{K}$ and $\feadim_{V}$ denote the numbers of feature maps in the key and
value tensors, respectively. The key and value tensors are then
unfolded into matrices along mode-3~\cite{kolda2009tensor},
resulting in the key and value matrices as
\begin{eqnarray}
\boldsymbol{K}&=&\mbox{Unfold}_3(\boldsymbol{\mathscr{K}})\in\mathbb{R}^{\feadim_{K}\times HW},\label{eq:unfold1}\\
\boldsymbol{V}&=&\mbox{Unfold}_3(\boldsymbol{\mathscr{V}})\in\mathbb{R}^{\feadim_{V}\times
HW}.\label{eq:unfold2}
\end{eqnarray}
Here the columns of these matrices are the mode-3 fibers of the corresponding
tensors~\cite{kolda2009tensor}.

To perform attentive operations, we next define the query vectors used in our A-grouping. 
Our proposed method aims at using different groups to identify different spatial positions, thereby facilitating the
interpretability of our metric learning model. Hence, we propose to achieve
this by computing a group-specific weight for each spatial position.
To this end, we introduce a set of learnable query vectors $\boldsymbol Q =
[\boldsymbol{q}_{1}, \boldsymbol{q}_{2}, \cdots, \boldsymbol{q}_{\ngs}]\in \mathbb{R}^{\feadim_{K} \times \ngs}$, where $\ngs$
denotes the number of groups. Note that there is one query vector
for each group, and the number of query vectors determines the
number of groups. These query vectors are randomly initialized and
their values are trained along with other parameters in the network~\cite{yang2016hierarchical,LiuTextICDM19,YuanBigNeuronICDM19}.
In addition, the dimensionality of the query vectors needs to be the
same as that of the key vectors as columns of the matrix
$\boldsymbol{K}$. 

To be concrete, let us focus on one particular query $\boldsymbol{q}_{i}$. We
measure the similarity between $\boldsymbol{q}_{i}$ and the key vectors as
columns of the matrix $\boldsymbol{K}$. In this work, we use the
inner product as the similarity measurement. We also normalize the
similarities corresponding to each query using the softmax function
so that similarity values are between 0 and 1 and sum to one for
each query vector. Mathematically, these operations for all query
vectors can be expressed in matrix form as
\begin{equation}\label{eq:S}
\boldsymbol{A} = \mbox{Softmax}(\boldsymbol{Q}^T
\boldsymbol{K})\in\mathbb{R}^{\ngs \times HW},
\end{equation}
where $\boldsymbol{A}$ denotes the similarity score matrix. Note
that the Softmax function is applied to each row of
its input independently. 

Intuitively, the values of the
similarity score matrix $\boldsymbol{A}$ measures the importance of spatial positions. Hence, we use $\boldsymbol{A}$ to generate groups of features as a weighted sum of the value vectors in $\boldsymbol{V}$. Specifically, each row of $\boldsymbol{A}$ represents the position weights for one group. For a specific group $i$, its weights are defined as $\boldsymbol{a}_{i}\in\mathbb{R}^{1 \times HW}$, which is the $i$-th row of $\boldsymbol{A}$.
Note that $\boldsymbol{a}_{i}$ contains $HW$ elements and each element corresponds to one spatial position.
In addition, each column of the value matrix $\boldsymbol{V}$ is corresponding to one spatial position. 
Hence, we multiply each column vector of $\boldsymbol{V}$ by the corresponding element of $\boldsymbol{a}_{i}$ and then sum $HW$ new position vectors to generate the feature vector of $i$-th group. Mathematically, the procedure can be expressed as
\begin{equation}
    \boldsymbol{F} = \boldsymbol{A}\boldsymbol{V}^T\in\mathbb{R}^{\ngs \times \feadim_{V}},
    \label{eq10}
\end{equation}
where $\boldsymbol{F}$ contains the $P$ groups of features as its rows, and the dimension of each group is commonly set to the same, \emph{i.e.}, $\feadim_{V}$. 
We illustrate our proposed A-grouping module in Figure~\ref{framework} where the number of groups is set to $\ngs=3$. Note that the figure is shown in a $2$-D spatial view for intuitive observations.

Finally, after obtaining $\boldsymbol{F}$, we independently compute the metric loss for each group, e.g., as in Eqs.~\eqref{con_loss}, \eqref{bd_loss}, and \eqref{mar_loss}. 
Intuitively, the metric loss will encourage the learnable queries to assign large weights to the more important positions while small weights to the less informative positions. Note that the definition of importance is group-specific and varies for different groups. In addition, the diversity loss is computed across every two different groups by Equation~\eqref{l_d}. This loss encourages these different groups of features to capture different characteristics of the images. Overall, the whole model is trained by using these two losses jointly.

\subsection{Interpretability}\label{sec:interpretability}
Intuitively, with the diversity loss, different groups tend to capture different patterns from input images. However, without investigating what different groups are detecting, it is unknown whether the DGML models work in our expected way. The lack of interpretability may prevent the use of DGML models in critical applications, which is a common limitation of deep learning approaches~\cite{Yuan:TPAMI20, Yuan:AAAI19}.
In this work, our proposed A-grouping employs the attention mechanism, thus making our DGML models interpretable. Specifically, we study the position-wise attention weight maps for different groups. To enable the interpretability of our proposed method, we fold the attention weight matrix $\boldsymbol{A}$ into a 3-way tensor as
\begin{equation}\label{eq:S2T}
\boldsymbol{\mathscr{A}} = \mbox{Fold}(\boldsymbol{A}) \in\mathbb{R}^{H\times W\times \ngs},
\end{equation}
where the fold operation reverses the unfolding operations in Eqs~(\ref{eq:unfold1}) and (\ref{eq:unfold2}). Next, we map the attention weights to the input space. Formally, $\boldsymbol{\mathscr{A}}$ is resized to the same sizes as the input images by the bi-linear interpolation that
\begin{equation}
    \hat{\boldsymbol{\mathscr{A}}} = \mbox{Interpolation}(\boldsymbol{\mathscr{A}}) \in\mathbb{R}^{H^{\prime}\times W^{\prime}\times \ngs},
    \label{interpolation}
\end{equation}
where $H^{\prime}$ and $W^{\prime}$ are the spatial sizes of input images. Each channel of $\hat{\boldsymbol{\mathscr{A}}}$ corresponds to one group. We use $\hat{\boldsymbol{\mathscr{A}}}$ to interpret our DGML models. Specifically, it can answer what input spatial locations are important for each group and the connections between input spatial locations and output embeddings. 

First, in DGML models, different groups are expected to capture different characteristics of input images. In our A-grouping, we can verify this by visualizing the position-wise attention weight maps in the input space. According to Sec.~\ref{sec_a_grouping}, the $\ngs$ groups of features are computed by $\boldsymbol{F} = \mbox{Softmax}(\boldsymbol{Q}^T
\boldsymbol{K}) \boldsymbol{V}^T$. Note that the value matrix $\boldsymbol{V}$ and key matrix $\boldsymbol{K}$ is shared by all $\ngs$ group while $\ngs$ query vectors are learned for different groups separately. Since the diversity loss encourages different groups to generate different embeddings, the non-shared query vectors are trained to be different. As each query vector is attending to the same key matrix $\boldsymbol{K}$, the generated attention weight maps indicate what characteristics are captured by different queries. By mapping the attention maps to the input image, we obtain $\hat{\boldsymbol{\mathscr{A}}}$ and can use it to understand the meaning of different groups. 
\mycolor{The visualization results reported in Figure~\ref{cub_vis} in Section~\ref{exp_vis} clearly show that different groups focus on different spatial locations on input images.} In addition, a certain group consistently detects the same characteristics of different images. We can observe that group 1 captures the body of birds, group 2 focuses on the head of birds, group 3 focuses on the neck of birds, and group 4 focuses on the background. 

Second, our proposed A-grouping can explain the connections between output features and input images. According to Equation~\eqref{eq10}, the output features are determined by the weight matrix $\boldsymbol{A}$ and the value matrix $\boldsymbol{V}$. By mapping back to $3$-D tensors $\boldsymbol{\mathscr{A}}$ and $\boldsymbol{\mathscr{V}}$, the output feature vector for each group is the summation over the element-wise multiplication between its corresponding weight map and the value tensor. 
We illustrate it in the bottom right part of Figure~\ref{framework}. For the 
$p$-th group, its weight map corresponds to $p$-th channel of the tensor $\boldsymbol{\mathscr{A}}$, denoted as $\boldsymbol{a}_{p}$. Its output feature vector $\boldsymbol{f}_{p}  \in\mathbb{R}^{1 \times D_{V}} $ is a weighted sum of the feature vectors in different spatial locations of $\boldsymbol{\mathscr{V}}  \in\mathbb{R}^{H \times W \times D_{V}}$ and the weights are determined by $\boldsymbol{a}_{p}\in\mathbb{R}^{H \times W}$. Therefore, the elements in $\boldsymbol{a}_{p}$ determines which spatial locations are more important to group $p$ and its corresponding output embeddings. To be concrete, for the spatial location $(i, j)$, its corresponding weight is $a_{p,ij}$, which determines the contribution of $\boldsymbol{v}_{ij} \in \mathbb{R}^{1 \times \feadim_{V}} $ to output feature vector $\boldsymbol{f}_{p}$. For the group $p$, the weight map $\boldsymbol{a}_{p}$ is obtained by attending its corresponding query $\boldsymbol{q}_p$ to the value $\boldsymbol{\mathscr{V}}$. Since the metric loss encourages the output features to capture discriminative input information, then the query is trained to capture discriminative spatial locations such that the weights for these spatial locations are dominant enough. \mycolor{As shown in Figure~\ref{cub_vis} in Section~\ref{exp_vis}, different groups consistently capture different input characteristics and these characteristics, such as bird head, bird neck, and bird body, are discriminative.} This is consistent with our expectation that different groups should capture important but different input information to generate output features.


In comparison, the M-grouping method is not interpretable. It learns groups of features through multiple linear projections. However, it cannot explain what characteristics are detected by different groups. It is unknown which input characteristics are important to the output features. Meanwhile, the C-grouping method has limited interpretation ability since it learns one weight for each feature map. Hence, it can explain which channel is more important for output embeddings. However, it cannot interpret the connections between input and output features. In addition, the G-grouping method is more interpretable than C-grouping as it learns element-wise weights for the feature maps via CNNs. For each group, the sizes of learned weights are $H \times W \times \feadim_{G}$ where $\feadim_{G}$ is the number of channels. By exploring each $H \times W$ weight map, we can understand the feature-level importance for different spatial locations. However, it is challenging to map back to the input image to study the importance of different spatial locations in the input since there are $\feadim_{G}$ weight maps for each group. In addition, existing studies~\cite{anderson2018bottom} have shown that attention-based methods are more interpretable than CNN methods. Furthermore, the learnable query vectors in our method can encourage the weight maps to have larger values for important spatial locations, which make the attention maps more interpretable. We compare our proposed A-grouping with G-grouping in Section~\ref{exp_vis} to study the interpretation performance.

\subsection{Permutation and Translation Invariance}\label{sec:invariance}
In addition to the interpretability, our proposed A-grouping module is permutation invariant, which is a promising property for representation learning tasks. 
Intuitively, if two input images contain the same objects but they are not spatially aligned, then it is promising to capture the key objects and output the same embedding vectors. In our proposed A-grouping module, the output feature vectors will remain the same no matter how the input spatial locations are permutated. Hence, our method can capture important input characteristics regardless of their spatial locations.  
In the following, we provide a formal definition of matrix column permutation and then prove the permutation invariance property of our proposed A-grouping module. 

\begin{definition} [Permutation matrices and matrix column permutations]
Given a permutation $\pi$ of $n$ elements, the $n \times n$ permutation matrix can be define as $\boldsymbol{U}_{\pi} = [\boldsymbol{e}_{\pi(1)}, \boldsymbol{e}_{\pi(2)}, \cdots, \boldsymbol{e}_{\pi(n)}]$, where $\boldsymbol{e}_{\pi(i)} \in \mathbb{R}^{n}$ is a one-hot vector whose $\pi(i)$-th element is $1$.
Given a matrix $\boldsymbol{B} \in \mathbb{R}^{m \times n}$ and a permutation $\pi$, a matrix column permutation is a transformation $\mathcal{T}_\pi: \mathbb{R}^{m \times n} \to \mathbb{R}^{m \times n}$, defined as
\begin{equation}
   \mathcal{T}\pi(\boldsymbol{B}) = \boldsymbol{B}\boldsymbol{U}_{\pi}.
\end{equation}
\end{definition}
Here the matrix column permutation $\mathcal{T}\pi(\boldsymbol{B})$ permutes the columns of $\boldsymbol{B}$ using the permutation $\pi$.
Based on this definition, the permutation invariance of our method can be proved in the following theorem.

\begin{theorem} [Permutation invariance]
Given the query vector $\boldsymbol{q} \in \mathbb{R}^{D_{K}}$ corresponding to a particular group, the key matrix $\boldsymbol{K} \in \mathbb{R}^{D_{K} \times HW}$, the value matrix $\boldsymbol{V} \in \mathbb{R}^{D_{V} \times HW}$, and a matrix column permutation operator $\mathcal{T}_\pi$, the operator $\mathcal{G}(\boldsymbol{q}, \boldsymbol{K}, \boldsymbol{V})= \mbox{Softmax}(\boldsymbol{q}^T \boldsymbol{K}) \boldsymbol{V}^T$ is invariant to column permutations on $\boldsymbol{K}$ and $\boldsymbol{V}$. That is, the below equality holds:
\begin{equation}
    \mathcal{G}(\boldsymbol{q}, \boldsymbol{K}, \boldsymbol{V}) = \mathcal{G}\left(\boldsymbol{q}, \mathcal{T}_\pi(\boldsymbol{K}), \mathcal{T}_\pi(\boldsymbol{V})\right).
\end{equation}
\end{theorem}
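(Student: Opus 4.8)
The plan is to substitute the permuted matrices $\mathcal{T}_\pi(\boldsymbol{K}) = \boldsymbol{K}\boldsymbol{U}_\pi$ and $\mathcal{T}_\pi(\boldsymbol{V}) = \boldsymbol{V}\boldsymbol{U}_\pi$ directly into $\mathcal{G}$ and show that the permutation matrix cancels against its own transpose, leaving the original expression unchanged. The two ingredients I would need are the orthogonality of the permutation matrix $\boldsymbol{U}_\pi$ and the equivariance of the row-wise $\mbox{Softmax}$ under a column permutation.

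First I would record two elementary facts about $\boldsymbol{U}_\pi$ straight from the definition. Since the columns $\boldsymbol{e}_{\pi(1)}, \ldots, \boldsymbol{e}_{\pi(n)}$ are distinct standard basis vectors (as $\pi$ is a bijection), $\boldsymbol{U}_\pi$ has orthonormal columns, and being square it is orthogonal, so $\boldsymbol{U}_\pi \boldsymbol{U}_\pi^T = \boldsymbol{I}$. I would also note that $\boldsymbol{q}^T(\boldsymbol{K}\boldsymbol{U}_\pi) = (\boldsymbol{q}^T\boldsymbol{K})\boldsymbol{U}_\pi$, so the argument fed into the $\mbox{Softmax}$ on the permuted side is simply the original row vector $\boldsymbol{q}^T\boldsymbol{K} \in \mathbb{R}^{1 \times HW}$ with its entries reordered by $\pi$.

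The crux, which I expect to be the main obstacle, is establishing that the $\mbox{Softmax}$ commutes with the column permutation: for any row vector $\boldsymbol{z}$, one has $\mbox{Softmax}(\boldsymbol{z}\boldsymbol{U}_\pi) = \mbox{Softmax}(\boldsymbol{z})\boldsymbol{U}_\pi$. The reason is that $\mbox{Softmax}$ normalizes each entry by the sum of exponentials over all entries of the row, and that normalizing sum is invariant to reordering; hence permuting the inputs merely permutes the outputs in the identical way. This step is not computationally hard, but it is the only one that depends on the specific structure of $\mbox{Softmax}$ rather than on generic linear algebra, so it must be stated carefully with respect to the row-wise action of the operator.

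Finally I would chain these facts together. Starting from $\mathcal{G}(\boldsymbol{q}, \boldsymbol{K}\boldsymbol{U}_\pi, \boldsymbol{V}\boldsymbol{U}_\pi) = \mbox{Softmax}(\boldsymbol{q}^T\boldsymbol{K}\boldsymbol{U}_\pi)\,(\boldsymbol{V}\boldsymbol{U}_\pi)^T$, I would apply the equivariance lemma to rewrite the $\mbox{Softmax}$ factor as $\mbox{Softmax}(\boldsymbol{q}^T\boldsymbol{K})\boldsymbol{U}_\pi$, use $(\boldsymbol{V}\boldsymbol{U}_\pi)^T = \boldsymbol{U}_\pi^T\boldsymbol{V}^T$, and then cancel the middle factor via $\boldsymbol{U}_\pi\boldsymbol{U}_\pi^T = \boldsymbol{I}$. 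This recovers $\mbox{Softmax}(\boldsymbol{q}^T\boldsymbol{K})\boldsymbol{V}^T = \mathcal{G}(\boldsymbol{q}, \boldsymbol{K}, \boldsymbol{V})$, which is exactly the claimed equality and completes the proof.
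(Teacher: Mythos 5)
Your proposal is correct and follows essentially the same route as the paper's proof: substitute $\mathcal{T}_\pi(\boldsymbol{K}) = \boldsymbol{K}\boldsymbol{U}_\pi$ and $\mathcal{T}_\pi(\boldsymbol{V}) = \boldsymbol{V}\boldsymbol{U}_\pi$, exploit the fact that the row-wise Softmax commutes with a column permutation (the paper handles this by expanding the Softmax into its $\mbox{exp}/\mbox{sum}$ form and noting $\mbox{sum}(\mbox{exp}(\boldsymbol{q}^T\boldsymbol{K}\boldsymbol{U}_\pi)) = \mbox{sum}(\mbox{exp}(\boldsymbol{q}^T\boldsymbol{K}))$), and cancel via $\boldsymbol{U}_\pi\boldsymbol{U}_\pi^T = \boldsymbol{I}$. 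Your packaging is, if anything, slightly cleaner, since you isolate the equivariance $\mbox{Softmax}(\boldsymbol{z}\boldsymbol{U}_\pi) = \mbox{Softmax}(\boldsymbol{z})\boldsymbol{U}_\pi$ as an explicit lemma rather than carrying it implicitly through the expanded computation.
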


\begin{proof}
\begin{equation}
\begin{split}
    \mathcal{G}(\boldsymbol{q}, \mathcal{T}_\pi(\boldsymbol{K}), \mathcal{T}_\pi(\boldsymbol{V})) 
    &=\mbox{Softmax}(\boldsymbol{q}^T \mathcal{T}_\pi(\boldsymbol{K})) \mathcal{T}_\pi(\boldsymbol{V})^T  \\
    &= \mbox{Softmax}(\boldsymbol{q}^T \boldsymbol{K} \boldsymbol{U}_{\pi}) (\boldsymbol{V} \boldsymbol{U}_{\pi})^T\\
     &= \frac{1}{\mbox{sum}(\mbox{exp}(\boldsymbol{q}^T \boldsymbol{K} \boldsymbol{U}_{\mycolor{\pi}}))}
    \boldsymbol{q}^T \boldsymbol{K} \boldsymbol{U}_{\pi}
    \boldsymbol{U}_{\pi}^{T}\boldsymbol{V}^{T}\\
    &=\frac{1}{\mbox{sum}(\mbox{exp}(\boldsymbol{q}^T \boldsymbol{K}))}
    \boldsymbol{q}^T \boldsymbol{K} \boldsymbol{V}^{T}\\
    &=\mathcal{G}(\boldsymbol{q}, \boldsymbol{K}, \boldsymbol{V}),
\end{split}
\label{p1}
\end{equation}
where $\mbox{sum}(\cdot)$ is the summation and $\mbox{exp}(\cdot)$ is the element-wise exponential function. Note that $\mbox{sum}(\mbox{exp}(\boldsymbol{q}^T \boldsymbol{K} \boldsymbol{U}_{\pi})) = \mbox{sum}(\mbox{exp}(\boldsymbol{q}^T \boldsymbol{K}))$. This is because that the permutation only change the positions but not the numbers, and these two results are equal after summation over all the positions.
\end{proof}

According to Sec.~\ref{sec_a_grouping}, the key matrix $\boldsymbol{K}$ and the value matrix $\boldsymbol{V}$ is obtained by performing $1 \times 1$ convolution and unfolding operations on the CNN output $\boldsymbol{\mathscr{I}}$. Hence, performing matrix column permutation on $\boldsymbol{K}$ and $\boldsymbol{V}$ simultaneously is equivalent to performing spatial permutation on $\boldsymbol{\mathscr{I}}$. Therefore, our proposed A-grouping module is permutation invariant. In addition, CNNs are translation invariant and the whole framework consists of CNNs and our proposed A-grouping modules. Hence, the whole framework is translation invariant. This property enables our method to capture important input characteristics regardless of their spatial locations, resulting in better robustness and more accurate interpretations. In Figure \ref{shift cars}, we provide several examples to demonstrate the translation invariance property of our framework. It shows that our method precisely captures car headlights and car wheels regardless of their spatial locations. 
\begin{figure}[t]
	\centering
	\includegraphics[width=.98\linewidth]{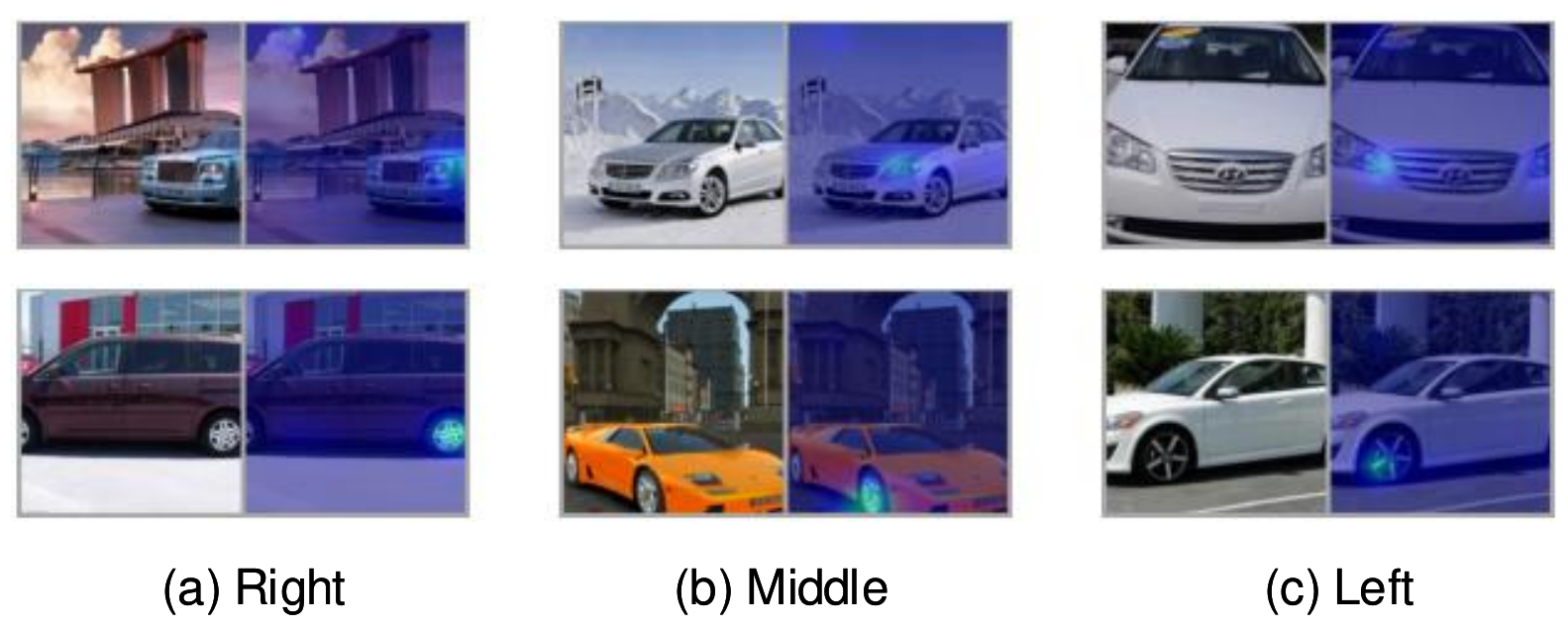}
	\caption{Examples showing the translation invariance property of our method. Each image pair contains the raw image (left) and the attention weight map visualization (right). Our method can capture the key object components in different locations.}
	\label{shift cars}
\end{figure}

\section{Experimental Studies}
In this section, we demonstrate the effectiveness of our proposed method in four aspects:
\begin{itemize}
	\item Our method outperforms the comparing baselines and achieves the state-of-the-art performance, as shown in Sec. \ref{exp_bl}. 
	\item With different backbones and loss functions, our proposed method consistently outperforms other grouping modules,  as discussed in Sec. \ref{exp_grp}.
	\item In Sec. \ref{exp_vis} we demonstrate the interpretability of our proposed A-grouping module. 
\end{itemize}
 
\renewcommand\arraystretch{1.2}
\begin{table*}[htp]
	\centering
	\caption{Three loss functions used in the experiments and their corresponding hyperparameters. }
	\begin{tabular}{l|c|c}
		\hline
		Loss & Formula & Hyperparameters\\
		\hline
		Contrastive &$l_{i,j}*d_{i,j} + (1-l_{i,j})*\left[m-d_{i,j}\right]_+$ & $m = 1$ 
		\\
		\hline
		\multirow{2}{*}{Binomial deviance}
		&$\log\left(1+e^{-\alpha(s_{i,j}-m)\beta_{1}}\right) + $ 
		&\multirow{2}{*}{$m=0.5, \alpha=2, \beta_{1}=25, \beta_{0}=1$}
		\\
		&$(1-l_{i,j}) * \log\left(1+e^{\alpha(s_{i,j}-m)\beta_{0}}\right)$&\\
		\hline
		\multirow{2}{*}{Margin} 
		&$l_{i,j}  * \left[d_{i,j}-(\eta-m)\right]_{+} + $ 
		&\multirow{2}{*}{\mycolor{$m=0.2, \eta=1.2, lr_\eta = 5e-4$}}
		\\
		&$(1-l_{i,j}) * \left[(\eta + m)-d_{i,j}\right]_{+}$&\\
		\hline
	\end{tabular}
	\label{loss}
\end{table*}

\subsection{Experimental Setup}~\label{sec:exp_setup}
We evaluate our model on two computer vision tasks, image retrieval, and image clustering. 
The Pytorch \cite{paszke2017automatic} deep learning framework is used for implementation and the ADAM optimizer \cite{kingma2014adam} is adopted to train the model. For each iteration, we set the batch size to $112$, composed by $56$ classes and $2$ examples for each class. The total embedding size is fixed to $512$, and the embedding size of each group is the quotient of the total embedding size and the number of groups.
In the following, we introduce the datasets, loss functions, and evaluation metrics.

\textbf{Datasets.}
We use three large scale image datasets are used in our experiment, including CUB-200-2011 \cite{wah2011caltech}, Cars-196 \cite{krause20133d}, Stanford Online Products (SOP) \cite{oh2016deep}.
\begin{itemize}
	\item CUB-200-2011~\cite{wah2011caltech} contains $11,788$ images from $200$ bird categories. The training set is composed of $5,864$ images from the first $100$ categories and the testing set is composed of the remaining $5,924$ images from the other $100$ categories. In this dataset, samples are evenly distributed across different categories.
	\item Cars-196~\cite{krause20133d} contains $16,185$ images from $196$ car categories with even distribution. We use the first $98$ categories ($8,054$ images) for training and the remaining $98$ categories ($8131$ images) for testing.
	
	\item SOP~\cite{oh2016deep} contains $120,053$ product images covering $22,634$ categories. We use $59,551$ images of the first $11,318$ categories for training and $60,502$ images of the remaining $11,316$ categories for testing. Note that in the SOP dataset, the data distribution is unbalanced since most classes only contain very few examples. 
\end{itemize}

For data preprocessing, we follow the protocol used in the previous work \cite{roth2020revisiting, milbich2020diva}. Training examples are processed by randomly cropping the raw images and then resize them to $224 \times 224$. Additionally, the random horizontal flipping with a probability of $0.5$ is applied. The evaluation examples are obtained by center cropping and resizing the raw images.

\textbf{Loss Functions.}
In our experiment, the contrastive loss \cite{chopra2005learning}, the binomial deviance loss \cite{yi2014deep}, and the margin loss \cite{wu2017sampling} are use as the metric losses $\mathcal{L}_{\text{met}}$. We summarize their mathematical formulations and hyper-parameters in Table~\ref{loss} \mycolor{ where $lr_\eta$ is the learning rate of $\eta$.}
In addition, the binomial deviance loss is employed as the diversity loss to encourage the diversity among different groups. Finally, the whole loss function can be written as
\begin{equation}
\begin{split}
	\mathcal{L} = \frac{1}{N_b}\sum_{n=1}^{N_b} \mathcal{L}_{\text{met}} + 
		\lambda_{1}* \frac{1}{N_g}\sum_{n=1}^{N_g} \mathcal{L}_{\text{div}} +
		\lambda_2 * \|W\|^{2}_{2},
	\label{overall_loss}
\end{split}
\end{equation}
where $N_b$ and $N_g$ are the numbers of sample pairs and group pairs within one batch. The $L2$ regularization term $\|W\|^{2}_{2}$ is employed to avoid the overfitting problem. \mycolor{Note that all grouping methods are compared fairly with the same metric loss function.}

\textbf{Evaluation Metrics.}
Following the existing study~\cite{roth2020revisiting}, we employ several
evaluation metrics in our experiments, including Recall at $1$,  Recall at $2$, Normalized Mutual Information (NMI) \cite{jegou2010product}, F1 score \cite{sohn2016improved}, and class-wise mean average precision measured on recall (mAP) \cite{roth2020revisiting}.

\renewcommand\arraystretch{1.5}
\begin{table*}[t]
	\centering
	\caption{\mycolor{Comparisons with the state-of-the-art methods on the CUB-200-2011, Cars-196, and  Stanford Online Products (SOP) datasets. Best results are shown in bold.}}
	\begin{tabular}{l|c|ccc|ccc|ccc}
		\hline
		\multicolumn{2}{c|}{Datasets$\rightarrow$} & \multicolumn{3}{c}{CUB-200-2011}& \multicolumn{3}{c}{Cars-196}&\multicolumn{3}{c}{SOP}\\
		\hline
		Method$\downarrow$ & Arch$\downarrow$& R$@$1 & R$@$2 &NMI  & R$@$1  & R$@$2&NMI& R$@$1 & R$@$10 &NMI\\
		\hline

		HDC&I-v1&53.6&65.7&-&73.7&83.2&-&69.5&84.4&-\\
		BIER&I-v1&55.3&67.2&-&78.0&85.8&-&72.7&86.5&-\\
		A-BIER&I-v1&57.5&68.7&-&82.0&89.0&-&74.2&86.9&-\\

		HAML &I-v1&55.2&68.7&65.1&81.1&88.8&\textbf{71.9}&70.7&85.0&\textbf{91.1}\\

		Ours-I & I-v1 &\textbf{62.7}&\textbf{73.8}&\textbf{67.0}&\textbf{82.8}&\textbf{89.6}&69.6&\textbf{75.5}&\textbf{87.8}&90.1\\
		\hline
		\multicolumn{2}{l|}{}& R$@$1 & R$@$2 &NMI  & R$@$1  & R$@$2&NMI& R$@$1 & R$@$2 &NMI\\
		\hline
		Triplet & R-50 &62.9&74.3&67.53&79.1&86.7&65.9&77.4&82.0&90.1\\
		Angular & R-50 &62.1&73.7&67.6&78.0&86.0&66.5&73.2&78.1&89.5\\
		Npair & R-50 &61.0&72.7&66.9&76.1&84.6&66.1&75.9&80.7&89.8\\
		Margin & R-50 &63.1&74.4&68.2&79.9&87.5&67.4&78.4&82.8&90.4\\
		R-Margin & R-50 &64.9&75.6&68.4&82.4&89.1&68.7&78.5&83.0&90.3\\
		MS &R-50&62.8&74.4&68.6&81.7&88.9&69.4&78.0&82.6&90.0\\
		DCESML &R-50&65.9&76.6&69.6&84.6&90.7&70.3&75.9&88.4&90.2\\
		DiVA &R-50&69.2&79.3&71.4&87.6&92.9&72.2&\textbf{79.6}&91.2&90.6\\
		DiVA*&R-50&68.6&79.1&70.8&86.9&92.1&72.3&77.6&90.1&90.0\\		
		\hline
		Ours-R & R-50 &\textbf{70.0}&\textbf{79.8}&\textbf{71.7}&\textbf{88.7}&\textbf{93.2}&\textbf{72.7}&79.2&\textbf{91.8}&\textbf{90.8}\\
		\hline
	\end{tabular}
	\label{exp_baseline}
\end{table*}

\renewcommand\arraystretch{1.5}
\begin{table*}[t]
	\centering
	\caption{\mycolor{Comparisons between our A-grouping method, the M-grouping, G-grouping, and C-grouping methods on the CUB-200-2011 and Cars-196 datasets. All modules are evaluated using two CNN backbones and three metric loss functions. Note that N-grouping denotes the module with no grouping technique. For grouping models, the output embedding sizes are $4 \times 128$, indicating $4$ groups and $128$ feature dimensions for each group. For none grouping models, the embedding type is $1 \times 512$. The best results are shown in bold.}}
	\begin{tabular}{l|l|ccccc|ccccc}
		\hline
		&& \multicolumn{5}{c}{CUB-200-2011}& \multicolumn{5}{c}{Cars-196} \\
		\hline
		&& R$@$1 &R$@$2 &NMI &F1 &mAP &R$@$1 & R$@$2 &NMI &F1 &mAP\\
		\hline
		\multirow{3}[6]*{\tabincell{c}{Inception-v1\\+\\Contrastive}} 
		&N-grouping\hspace{4.5pt}(Emb: $1\times 512$)&54.74&66.96&62.23&30.28&18.55&65.54&76.50&57.77&26.88&16.56\\
		\cline{2-12}
		&M-grouping\hspace{3.7pt}(Emb: $4\times128$)&56.50&\textbf{68.92}&62.90&31.46&19.12&68.10&77.73&59.34&27.98&17.70\\
		&G-grouping\hspace{5.1pt}(Emb: $4\times128$)&54.71&66.80&61.94&30.13&18.61&67.36&77.15&58.83&27.68&18.33\\
		&C-grouping\hspace{5.5pt}(Emb: $4\times128$)&56.45&68.08&63.13&\textbf{31.98}&\textbf{19.83}&\textbf{71.84}&\textbf{81.21}&\textbf{61.41}&\textbf{30.71}&\textbf{19.54}\\
		&A-grouping\hspace{5.0pt}(Emb: $4\times128$)&\textbf{57.06}&68.67&\textbf{63.06}&31.81&19.69&71.34&80.91&60.78&29.31&19.06\\
		
		\hline
		\hline
		\multirow{3}[6]*{\tabincell{c}{Inception-v1\\+\\Binomial}} 
		&N-grouping\hspace{4.5pt}(Emb: $1\times 512$)&57.12&68.30&62.50&31.60&19.23&71.70&80.89&60.85&30.38&17.35\\
		\cline{2-12}
		&M-grouping\hspace{3.7pt}(Emb: $4\times128$)&59.64&71.13&64.45&34.34&20.97&71.05&80.38&60.15&29.50&17.37\\
		&G-grouping\hspace{5.1pt}(Emb: $4\times128$)&57.11&69.01&64.65&\textbf{34.59}&19.26&72.49&81.53&61.55&30.53&18.43\\
		&C-grouping\hspace{5.5pt}(Emb: $4\times128$)&58.42&70.56&64.43&34.24&20.83&74.16&83.18&62.06&31.05&19.11\\
		&A-grouping\hspace{5.0pt}(Emb: $4\times128$)&\textbf{61.60}&\textbf{72.74}&\textbf{64.71}&34.53&\textbf{21.92}&
		\textbf{80.00}&\textbf{87.27}&\textbf{65.27}&\textbf{34.77}&\textbf{22.88}\\
		
		\hline			
		\hline
		\multirow{3}[6]*{\tabincell{c}{Inception-v1\\+\\Margin}} 
		&N-grouping\hspace{4.5pt}(Emb: $1\times 512$)&58.02&68.96&63.47&32.88&19.61&73.12&81.53&60.22&29.41&18.78\\
		\cline{2-12}
		&M-grouping\hspace{3.7pt}(Emb: $4\times128$)&57.77&69.50&64.86&35.00&19.81&72.75&82.01&61.49&31.28&18.50\\
		&G-grouping\hspace{5.1pt}(Emb: $4\times128$)&56.25&68.82&63.15&31.69&18.35&74.36&83.26&62.59&31.94&18.65\\
		&C-grouping\hspace{5.5pt}(Emb: $4\times128$)&60.20&71.66&65.89&36.49&21.94&76.77&84.68&64.22&33.79&19.98\\
		&A-grouping\hspace{5.0pt}(Emb: $4\times128$)&\textbf{62.90}&\textbf{73.13}&\textbf{66.91}&\textbf{37.19}&\textbf{22.96}&
		\textbf{80.23}&\textbf{86.32}&\textbf{65.54}&\textbf{35.28}&\textbf{21.19}\\
		
		\hline			
		\hline
		\multirow{3}[6]*{\tabincell{c}{ResNet-50\\+\\contrastive}} 
		&N-grouping\hspace{4.5pt}(Emb: $1\times 512$)&61.85&73.18&65.89&34.94&23.26&73.32&81.79&61.60&30.32&20.45\\
		\cline{2-12}
		&M-grouping\hspace{3.7pt}(Emb: $4\times128$)&62.69&73.94&66.06&\textbf{35.15}&\textbf{24.00}&71.97&81.54&61.58&31.00&20.29\\
		&G-grouping\hspace{5.1pt}(Emb: $4\times128$)&61.75&73.23&65.75&34.08&23.08&74.90&83.58&62.37&31.47&21.71\\
		&C-grouping\hspace{5.5pt}(Emb: $4\times128$)&62.24&72.81&64.51&32.42&23.24&75.07&83.41&\textbf{63.30}&\textbf{32.16}&\textbf{22.90}\\
		&A-grouping\hspace{5.0pt}(Emb: $4\times128$)&\textbf{63.22}&\textbf{74.12}&\textbf{66.51}&34.59&23.33&
		\textbf{76.07}&\textbf{83.91}&62.24&31.02&22.41\\

		\hline
		\hline	
		\multirow{3}[6]*{\tabincell{c}{ResNet-50\\+\\Binomial}} 
		&N-grouping\hspace{4.5pt}(Emb: $1\times 512$)&65.07&75.59&67.05&36.35&24.99&80.90&87.79&65.77&35.89&23.61\\
		\cline{2-12}
		&M-grouping\hspace{3.7pt}(Emb: $4\times128$)&64.47&75.78&68.39&38.00&24.74&82.33&89.02&67.03&36.98&25.84\\
		&G-grouping\hspace{5.1pt}(Emb: $4\times128$)&64.80&75.91&\textbf{69.12}&\textbf{39.37}&24.80&81.52&88.54&66.42&36.47&24.97\\
		&C-grouping\hspace{5.5pt}(Emb: $4\times128$)&66.54&\textbf{76.91}&68.74&39.11&\textbf{26.39}&83.73&89.95&69.11&\textbf{40.00}&27.25\\
		&A-grouping\hspace{5.0pt}(Emb: $4\times128$)&\textbf{66.87}&76.55&67.72&36.87&24.61&
		\textbf{85.70}&\textbf{90.78}&\textbf{69.71}&39.43&\textbf{27.82}\\
				
		\hline
		\hline	
		\multirow{3}[6]*{\tabincell{c}{ResNet-50\\+\\Margin}} 
		&N-grouping\hspace{4.5pt}(Emb: $1\times 512$)  &63.09&74.42&66.67&36.04&23.64&79.92&87.53&64.10&33.64&22.67\\
		\cline{2-12}
		&M-grouping\hspace{3.7pt}(Emb: $4\times128$)&65.14&75.93&68.16&37.84&24.57&82.25&88.70&67.78&38.22&24.40\\
		&G-grouping\hspace{5.1pt}(Emb: $4\times128$)&64.70&75.83&68.14&38.18&23.75&81.54&89.26&68.19&39.60&24.45\\
		&C-grouping\hspace{5.5pt}(Emb: $4\times128$)&65.01&76.40&68.30&37.53&24.62&84.93&90.79&70.28&41.11&27.90\\
		&A-grouping\hspace{5.0pt}(Emb: $4\times128$)&\textbf{68.62}&\textbf{79.05}&\textbf{70.77}&\textbf{41.31}&\textbf{26.79}&\textbf{87.06}&\textbf{92.24}&\textbf{70.96}&\textbf{41.73}&\textbf{29.15}\\
		
		\hline		
	\end{tabular}
	\label{tab_grouping}
\end{table*}

\subsection{Comparisons with Baselines} \label{exp_bl}
\mycolor{We show the effectiveness of our method by comparing it with several state-of-the-art methods. 
These methods mainly belong to two categories; namely the traditional DML methods with single-embedding outputs and the DGML methods with multi-embedding outputs.
The former category includes   {Triplet} \cite{oh2016deep},   {Angular} \cite{wang2017deep},   {Npair} \cite{sohn2016improved},   {Margin} \cite{wu2017sampling},   {R-Margin} \cite{roth2020revisiting},   {MS} \cite{wang2019multi}, and   {HAML} \cite{zheng2019hardness}. The latter category includes   {HDC} \cite{yuan2017hard},   {BIER} \cite{opitz2018deep},   {A-BIER} \cite{opitz2018deep},   {DCESML} \cite{sanakoyeu2019divide}, and   {DiVA} \cite{milbich2020diva}.
These baseline methods are developed based on two DNN backbones, including the Inception V1 and the ResNet-50. Hence, we implement our methods based on these two backbones, named Ours-I and Ours-R, to compare with these methods accordingly. Specifically, our models 
learn $\ngs = 3$ groups of embeddings, where the embedding size for each group is $170$. To train the model, Ours-I and Ours-R both adopt the margin loss as the metric loss function. The balance weight of the diversity loss $\lambda_{1}$ and regularization of parameters $\lambda_{2}$ in Equation~\eqref{overall_loss} are set to $0.02$ and $0.003$ respectively. The learning rate is
set to $2e-5$ for the models with Inception-V1 backbone and $1e-5$ for the models with ResNet-50 backbone. Finally, we compare different methods using the Recall and NMI metrics.}

\mycolor{The results are reported in Table~\ref{exp_baseline}. Note that the results of all baseline approaches are taken from existing studies~\cite{yuan2017hard, opitz2018deep, zheng2019hardness, milbich2020diva, roth2020revisiting}. Additionally, since we observe DiVA achieves the most competitive results, we reimplement its method  using the same environment as our methods, which is denoted as DiVA$^{*}$.
Obviously, we can observe that our proposed method outperforms all the baselines in most cases. Specifically, the Recall$@$1 (R$@$1) performance of Ours-I is $5.2\%$, $0.7\%$, $1.3\%$ higher than the previous best method over three datasets.The results obtained using ResNet-50 as the backbone show that Ours-R consistently and significantly outperforms the baseline approaches, except DiVA, across all the metrics over these three datasets. It is noteworthy that our method not only achieves superior performance but also demonstrates its interpretability, which is discussed in Section~\ref{exp_vis}.}

\subsubsection{Comparisons with DiVA}
\mycolor{We further discuss the performance when comparing with DiVA. As shown in Table~\ref{exp_baseline}, our method can outperform DiVA$^{*}$, the reimplementation of DiVA,  significantly and consistently. When comparing with DiVA, 
our Ours-R outperforms it in most cases but the gaps are reduced. Note that DiVA is a more complex model and introduces at least \textbf{$\textbf{8}$ more hyperparameters} than our method, which limits the practical applications. In addition, the memory cost of DiVA is higher than Ours-R since it builds a large memory queue and employs an extra running-average network to update it on the fly to estimate the negative distribution for the NCE loss. This mechanism leads to around \textbf{$\textbf{30\%}$ higher memory cost} than our method. Furthermore, DiVA is not interpretable while our method 
demonstrates promising interpretation results, as shown in Section~\ref{exp_vis}. Considering these facts, we believe the results demonstrate the effectiveness of our proposed method.}


\subsection{Comparisons with Grouping Modules} \label{exp_grp}
\mycolor{Next, we further compare our proposed A-grouping module with other grouping strategies in detail since grouping can yield significant performance improvements.}
Specifically, we compare our A-grouping module with four different grouping strategies, including No grouping (N-grouping), the Multi-linear grouping module (M-grouping)~\cite{opitz2018deep}, the Gate grouping module (G-grouping) \cite{kim2018attention} and the Channel grouping module (C-grouping) \cite{chen2019hybrid}. \mycolor{The architectures of different grouping modules are shown in Appendix A. For comprehensive studies,  we explore two different backbones: Inception-v1 \cite{szegedy2015going} and ResNet-50 \cite{he2016deep}, and three types of contrastive loss functions: the contrastive loss  \cite{chopra2005learning}, the binomial deviance loss \cite{yi2014deep}, and the margin loss \cite{wu2017sampling}.} These configurations are denoted as \textbf{Inception-v1 + Contrastive}, \textbf{Inception-v1 + Binomial}, \textbf{Inception-v1 + Margin}, \textbf{ResNet-50 + Contrastive}, \textbf{ResNet-50 + Binomial}, and $\textbf{ResNet-50 + Margin}$ respectively.
\mycolor{Note that for each configuration, these grouping modules are implemented under the same settings for fair comparisons.}
For different grouping methods, we study all configurations on CUB-200-2011 and Cars-196 datasets. Note that we only explore the $\textbf{ResNet-50 + Margin}$ configuration for the SOP dataset due to its expensive training time (around 3 days for each grouping method). 

\mycolor{In these experimental studies, we follow the settings from existing work~\cite{kim2018attention} to set the number of groups to $4$ and the embedding size for each group is $128$. The hyper-parameters in Equation~\eqref{overall_loss} are set to  $\lambda_1=0.01$ and $\lambda_2=0.001$. The learning rates are respectively set to $2e-5$ for the Inception-V1 backbone and $1e-5$ for the ResNet-50 backbone. For evaluations, we adopt the Recall, NMI, F1 score, and mAP metrics.} The results are reported in Table~\ref{tab_grouping} and Table~\ref{tab_sop}, from which we have the following observations:

\renewcommand\arraystretch{1.5}
\begin{table*}[t]
	\centering
	\caption{Comparisons between our A-grouping method, the M-grouping, G-grouping, and C-grouping methods on the Stanford Online Products dataset. The ResNet-50 is employed as CNN backbone and the margin loss is applied as the metric loss. The best results are shown in bold.}
	\begin{tabular}{l|l|ccccc}
		\hline
		&& \multicolumn{5}{c}{SOP}\\
		\hline
		&& R$@$1 &R$@$2 &NMI &F1 &mAP \\
		\hline	
		\multirow{3}[6]*{\tabincell{c}{ResNet-50\\+\\Margin}} 
		&N-grouping\hspace{4.5pt}(Emb: $1\times 512$)  &78.32&82.71&90.19&37.37&42.38\\
		\cline{2-7}
		&M-grouping\hspace{3.7pt}(Emb: $4\times128$)&78.60&82.92&90.40&38.54&42.59\\
		&G-grouping\hspace{5.1pt}(Emb: $4\times128$)&76.58&81.23&89.96&36.33&40.34\\
		&C-grouping\hspace{5.5pt}(Emb: $4\times128$)&77.66&82.03&90.18&37.46&41.63\\
		&A-grouping\hspace{5.0pt}(Emb: $4\times128$)&\textbf{78.70}&\textbf{83.11}&\textbf{91.95}&\textbf{38.55}&\textbf{42.67}\\
		
		\hline	
	\end{tabular}
	\label{tab_sop}
\end{table*}

\begin{itemize}
	\item Our proposed A-grouping module outperforms the other three grouping modules in most configurations with different backbones and loss functions.
	\item G-grouping generally obtains inferior performance than C-grouping and A-grouping modules.
	This is because G-grouping learns element-wise weights and may cause the over parameterized issue.
	\item The configuration \textbf{ResNet-50 + Margin} achieves the best performance among all possible combinations.
	\item Our proposed A-grouping module performs better on the CUB-200-2011 and Cars-196 datasets than the SOP dataset. The main reason is the unbalanced data distribution of the SOP dataset that most classes only contain less than 5 images. This is consistent with the observations in the existing studies~\cite{roth2020revisiting, opitz2018deep}.
\end{itemize}
\begin{figure*}[h!]
	\centering
	\includegraphics[width=0.95\linewidth, height=0.75\linewidth]{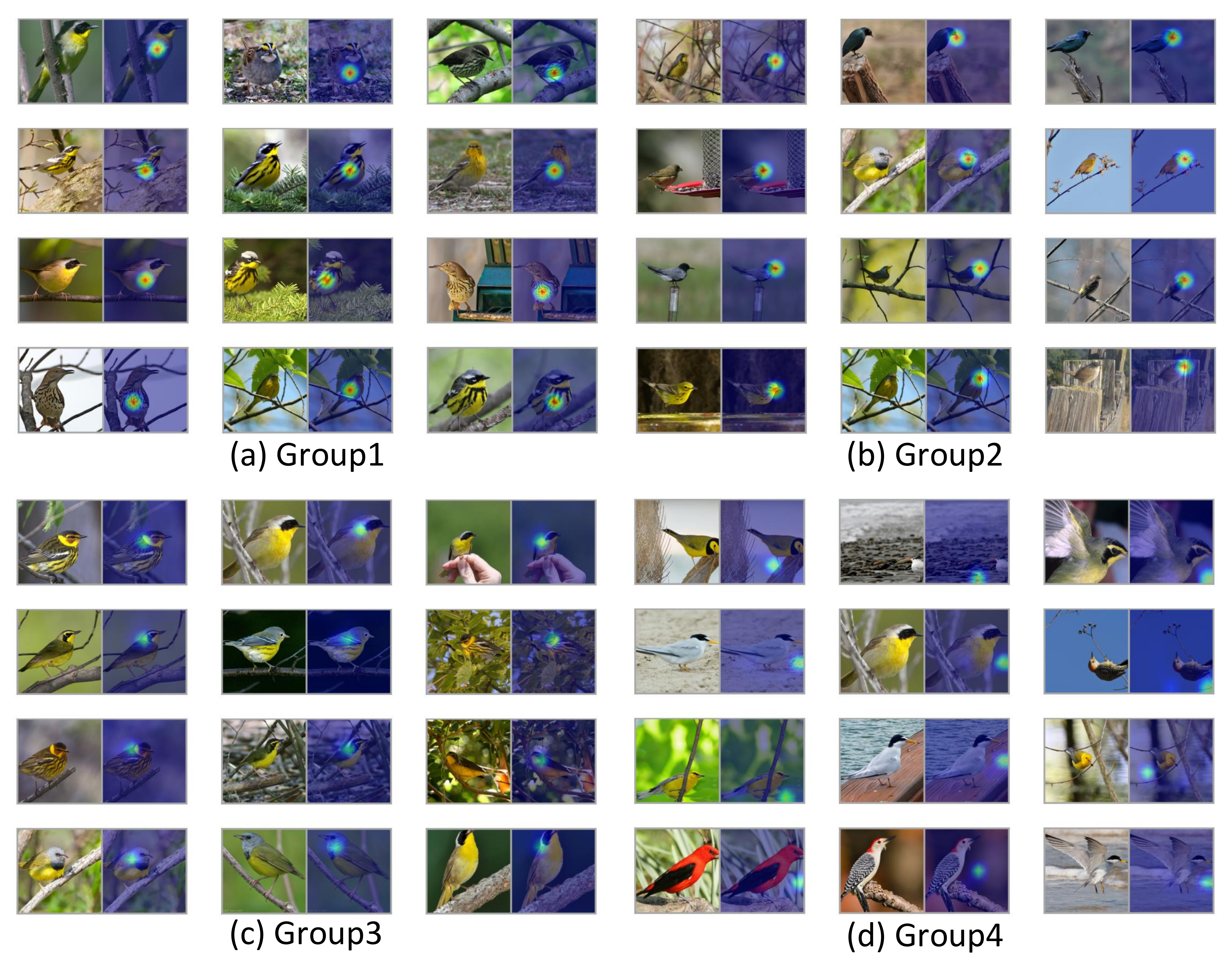}  
	\caption{Visualization results of the CUB-200-2011 dataset. Each image pair composes of the raw image on the left side and the attentive map visualization on the right side. The highlighted regions indicate the spatial locations that different groups are detecting. The four groups in our A-grouping focus on different patterns of the bird images regardless of the spatial locations of these patterns. These patterns are (a) the body, (b) the head, (c) the neck, and (d) the background.}
	\label{cub_vis}
\end{figure*}

\subsection{Study of Interpretability} \label{exp_vis}
\mycolor{Finally, we study the interpretability of different grouping methods on the CUB-200-2011 and Cars-196 datasets. We first study the visualization results of our proposed A-grouping based on the  {ResNet-50 + Margin} configuration.}
Specifically, we explore what input information is detected by different groups by visualizing the learned weight maps. Since our method has four groups of attentive weight maps, we resize the weight map and project it to the input image for each group. Hence, the visualization results can indicate which input regions are captured by different groups.

For the CNN backbone ResNet-50, each attention weight map is $7\times7$ since the sizes of ResNet-50 output feature maps are $7\times7\times2048$. Assuming there are $N$ images, we can obtain $4 \times N$ attention maps and each group has $N$ attention maps. Based on the values of these weight maps, we pick $12$ attention maps containing the top attentive weights for each group. The visualization results of the CUB-200-2011 and Cars-196 datasets are shown in Figure~\ref{cub_vis} and Figure~\ref{cars_vis} respectively. From Figure~\ref{cub_vis}, we can observe that the four groups in our A-grouping module focus on the body, head, neck, and background of the birds respectively. In addition, as shown in Figure~\ref{cars_vis}, the groups in our method capture the the characteristics of the glass window, the headlight, the wheel, and the background of cars respectively. Such visualization results clearly explain the meaning of different groups, which demonstrate the interpretability of our proposed method. The learnable queries can be understood as pattern detectors and large weights are generated once a particular pattern is detected. In addition, the experimental results show that our proposed A-grouping is robust to the spatial shifting and the orientation changes. For example, group 2 capture bird head precisely no matter where the bird head pattern is located and how the bird head is oriented. Furthermore, these patterns detected by our groups contain important and discriminative information to identify the image labels. It is shown that the groups in our method can capture important but different input characteristics. 

\mycolor{We also compare the visualization results for G-grouping, C-grouping, and our A-grouping modules, based on the  {ResNet-50 + Margin} configuration.
For C-grouping and G-grouping, we choose the channel with the strongest response for each group, and visualize the corresponding weight maps in the input space.}
 We report the visualization results in Figure~\ref{vis_com}. It is clear that our A-grouping can focus on more fine-grained information than the \mycolor{G-grouping and C-grouping.} Furthermore, different groups in our method can capture different but specific input patterns while all groups of the G-grouping method focus on the main objects of the images. Overall, the visualization results show that our method tends to generate more reasonable features than other grouping methods. 

\begin{figure*}[h!]
	\centering
	\includegraphics[width=0.95\linewidth, height=0.75\linewidth]{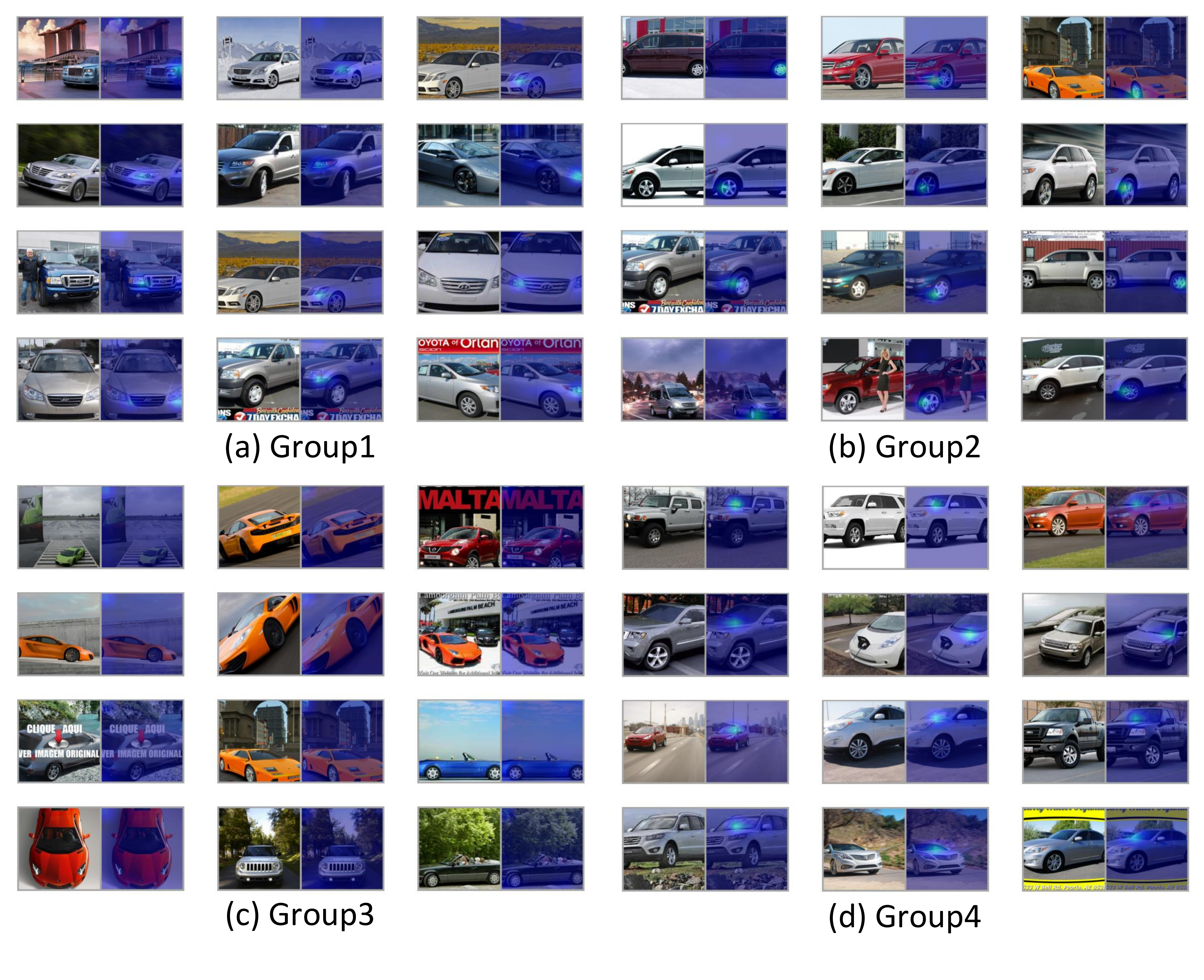}  
	\caption{The visualization results of the Cars-196 dataset. The four groups of the A-grouping module focus on (a) the glass window, (b) the headlight, (c) the wheel, and (d) the background of the cars images respectively.}
	\label{cars_vis}
\end{figure*}
\begin{figure*}[t!]
	\centering
	\includegraphics[width=0.95\linewidth, height=0.3\linewidth]{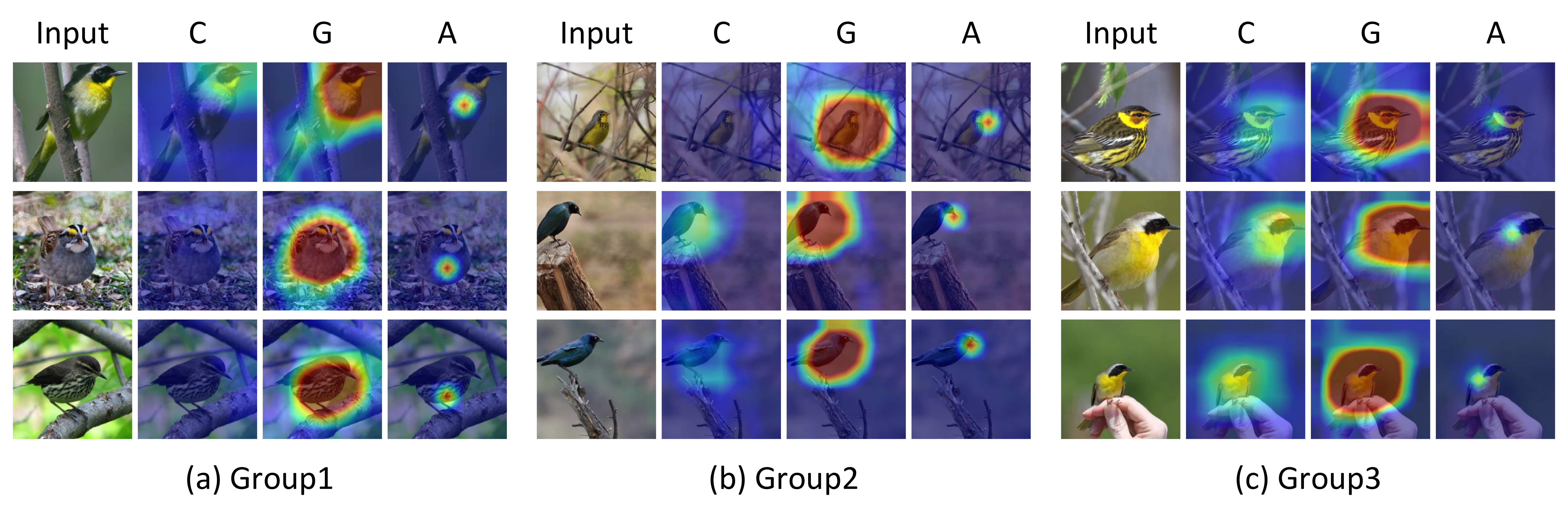}  	
	\caption{\mycolor{The visualization comparisons among C-grouping (C), G-grouping (G), and A-grouping (A). Our A-grouping focuses on more fine-grained information than the other two grouping modules.} In addition, each group of A-grouping are capturing different image patterns while all groups in G-grouping focus on the whole objects of the images.}
	\label{vis_com}
\end{figure*}

\section{Conclusion}
In this work, we study deep metric learning and propose an improved and interpretable grouping method, known as the A-grouping. A-grouping is more powerful than existing grouping methods in computing feature representations. More importantly, our proposed A-grouping is naturally interpretable.
We conduct thorough experiments on image retrieval and clustering tasks to evaluate the effectiveness of our method. We show that A-grouping achieves improved results across different datasets, base models, loss functions, and evaluation metrics. We perform ablation studies on grouping modules and show that our grouping module outperforms existing modules significantly. We also present comprehensive visualization interpretable results.

\bibliography{reference,attn,ji}
\bibliographystyle{IEEEtran}

\end{document}